\newif\ifarxiv
\arxivtrue
\ifarxiv
    \documentclass[twocolumn]{article}
\else
    \documentclass{ieeeaccess}
    \usepackage[colorlinks=true, allcolors=blue]{hyperref}
\fi
\usepackage{epsfig}
\usepackage{booktabs} 
\usepackage{threeparttable}
\usepackage{amssymb,amsmath,amsthm,multirow}
\usepackage{amsfonts,graphicx}
\usepackage{float}
\usepackage{caption}
\usepackage{subcaption}
\usepackage{algorithm}
\usepackage{algpseudocode}
\usepackage{url}
\usepackage{stfloats}
\usepackage{tabularx}
\usepackage{placeins}

\newtheorem{proposition}{Proposition}

\newcommand{\paperabstract}{Counterfactual explanations provide local, interpretable insight by identifying changes to an input that would alter its assigned outcome. Although well established in supervised learning, their extension to clustering is less direct, since cluster assignments are unlabeled and governed by the geometry of the partition. This paper introduces VoICE, a Voronoi-Induced Counterfactual Explainability framework for feature-weighted $k$-means clustering. Rather than treating cluster change as a crossing of a single pairwise centroid boundary, VoICE formulates counterfactual generation as projection onto the full weighted Voronoi region of a target cluster, incorporating feature weights directly into both the clustering geometry and the counterfactual objective to yield least-cost and parsimonious explanations under actionability constraints. Target regions are further intersected with data-derived bounds and homothetically contracted towards their centroids, limiting extrapolation and boundary sensitivity. VoICE consistently produces valid target-cluster membership, across several benchmark datasets, where the leading pairwise baseline does not. \textbf{Software:} https://github.com/rickfawley/VoICE.}

\begin{document}
\title{Counterfactuals for Feature-Weighted Clustering}
\ifarxiv
    \date{}
    \author{Richard J. Fawley\thanks{\texttt{rf23433@essex.ac.uk}}\\
    Renato Cordeiro de Amorim\thanks{Corresponding author, \texttt{r.amorim@essex.ac.uk}} \\
    \textit{\small School of Computer Science and Electronic Engineering, University of Essex, Wivenhoe, UK.}
    }

    \twocolumn[
    \maketitle
    \begin{abstract}
    \paperabstract
    \end{abstract}
    \vspace{1em}
    \\\noindent\textbf{Keywords:} 
    Counterfactual explanations; explainable AI; weighted $k$-means clustering.
    \vspace{1em}
    ]
\else
    \history{Date of publication xxxx 00, 0000, date of current version xxxx 00, 0000.}
    \doi{10.1109/ACCESS.2026.DOI}
    \author{\uppercase{Richard J. Fawley}\authorrefmark{*},
    \uppercase{Renato  Cordeiro  de  Amorim}}
    \address{Computer Science and Electrical Engineering Department, University of Essex, Wivenhoe, UK}
    \address{Contact e-mails: rf23433@essex.ac.uk, r.amorim@essex.ac.uk.}    
    \begin{abstract}
    \paperabstract
    \end{abstract}
    \begin{keywords}
        Counterfactuals, 
        Explainability,
        Clustering,
        feature weighting.
    \end{keywords}
    \maketitle
\fi

\section{Introduction}

Explainable Artificial Intelligence (XAI) aims to improve the transparency and interpretability of machine learning models by providing insight into their outputs. Among the various explanation paradigms, \emph{counterfactual explanations} have emerged as a particularly intuitive and actionable approach.

Given a dataset \(X=\{x_1, \ldots, x_n\}\) with each \(x_i \in X\) described over \(d\) features, a counterfactual explanation identifies the minimal modification to an observation \(x_i \in X\) that changes its assigned outcome \cite{wachter2017counterfactual,guidotti2024counterfactual}. This is typically formalised as finding a perturbed instance whose prediction differs from the original while remaining as close as possible under a chosen distance metric. For example, in a loan approval setting, a counterfactual explanation may indicate the minimal changes required for an applicant to change from a rejected to an approved outcome, whereas a parsimonious counterfactual explanation may indicate a minimal increase in income alone. 

Explainability methods in XAI are often categorised as either global or local. Global approaches aim to summarise the overall behaviour of a model, whereas local methods focus on explaining individual predictions. Counterfactual explanations belong to the latter category, providing insight into specific outcomes by identifying the minimal changes required to alter a model's decision. Practical counterfactual explanations are often required to satisfy additional properties, such as \emph{actionability}, where only certain features may be modified, and \emph{plausibility}, ensuring that generated explanations lie within realistic, data-supported regions of the feature space. 

Clustering algorithms aim to group observations in such a way that those within the same cluster (i.e., group) are more similar to one another, according to some selected similarity measure, than those between clusters. However, not all clustering methods produce strict partitions of all available data. For example, some density-based methods allow for noise that is not explicitly allocated to any cluster, fuzzy clustering assigns to each observation a degree of membership between 0 and 1 thereby allowing multiple cluster memberships, and probabilistic mixture models assign each observation a probability distribution over clusters rather than a single hard assignment \cite{ester1996dbscan, campello2013hdbscan, bezdek1984fcm, mclachlan2000finite}. Among clustering methods, $k$-means \cite{macqueen1967} is one of the most widely used \cite{ikotun2023k,sinaga2020unsupervised}, assigning each observation to the cluster whose centre (the centroid) is the nearest. This induces a geometric structure in which cluster membership changes occur when an observation crosses a boundary between clusters. Counterfactual explanations can therefore be interpreted as minimal changes to the feature values required to cross such boundaries.

Despite this geometric intuition, generating counterfactual explanations for clustering presents challenges. First, explanations may be highly sensitive near cluster boundaries, where small perturbations can lead to qualitatively different outcomes. Second, controlling the number of features involved in a counterfactual is difficult, limiting interpretability and actionability. Third, counterfactuals can lie in regions of the feature space unsupported by the observed data. These challenges are further amplified in feature-weighted clustering settings as counterfactual approaches provide no mechanism for incorporating feature importance (for details, see Section~\ref{sec:clustering_and_fw}).

In this paper, we introduce a method for generating counterfactual explanations for feature-weighted $k$-means, where each feature is assigned an independent non-negative weight. We formulate counterfactual generation as a constrained optimisation problem over the Voronoi cell (the region of space containing all points closer to a given centroid than to any other) of a target cluster, enabling explicit control over both perturbation magnitude and sparsity. Feature weights are incorporated directly into both the geometry and the optimisation objective, providing a principled mechanism for optimising the construction of parsimonious explanations. To the best of our knowledge, counterfactual explanations have not been studied in the context of feature-weighted $k$-means clustering, despite the widespread use of such algorithms.

To address instability, a well documented issue in clustering under small perturbations \cite{vonluxburg2006stability, bendavid2006stability}, we introduce a robustness-aware mechanism based on homothetic contraction, which scales a bounded target Voronoi region towards its centroid. This contraction yields a compact, empirically bounded subset of the decision region that reduces exposure to boundary-adjacent areas and removes unbounded directions. Restricting counterfactual generation to this contracted region provides explicit geometric control over boundary sensitivity and supports more locally stable explanations
\cite{shalev2014understanding, bishop2006pattern}. To summarise, this paper makes the following contributions:

\begin{itemize}
    \item To the best of our knowledge, the first mechanism to incorporate feature weights directly into counterfactual explanations for clustering, shaping both the clustering geometry and the counterfactual objective;
    \item An extension of clustering counterfactuals from pairwise source-target boundary projection to full weighted Voronoi-region projection, with counterfactual validity ensured by construction;
    \item A constrained optimisation framework that restricts counterfactual generation to the observed data under actionability constraints, with a homothetic contraction mechanism providing explicit geometric control over robustness;
    \item A parsimonious search procedure based on ranked actionable feature subsets and minimal intervention cardinality.
\end{itemize}

\section{Background}

This section reviews the key concepts and prior work that underpin our proposed method. We begin by discussing feature weighting in clustering and its role in shaping the geometry of cluster assignments. We then examine existing approaches to counterfactual explanations in clustering, highlighting their limitations. 

\subsection{Feature weighting in clustering}
\label{sec:clustering_and_fw}

Clustering algorithms partition a dataset \(X\) into \(k\) clusters \(C_1, \ldots, C_k\) based on a similarity or distance measure. In centroid-based methods such as $k$-means, each cluster \(C_j\) is represented by a centroid \(m_j \in \mathbb{R}^d\). Each observation \(x_i \in X\) is assigned to the cluster represented by the centroid that is nearest to \(x_i\). In the case of $k$-means, specifically, this minimises
\[
W = \sum_{j = 1}^k \sum_{x_i \in C_j} \sum_{v=1}^d (x_{iv} - m_{jv})^2,
\]
where,
\[
m_{jv} = \frac{1}{|C_j|} \sum_{x_i \in C_j} x_{iv}.
\]

The procedure itself follows three steps: (i) select \(k\) observations from \(X\) uniformly at random, and copy their values to \(m_1, \ldots, m_k\); (ii) for each observation \(x_i \in X\) identify its nearest centroid \(m_j\) and assign \(x_i\) to \(C_j\); (iii) update each centroid to the component-wise mean of the observations in its cluster. These steps are repeated until convergence.

Although popular, $k$-means does have some known drawbacks. For instance, its final clustering strongly depends on the initial centroids (chosen at random), the number of clusters \(k\) must be known beforehand, and it assumes that all features are equally relevant. Here, we are particularly interested in the latter. In many real-world datasets, only a subset of features contributes meaningfully to the underlying cluster structure, and assigning equal importance to all features may obscure informative patterns \cite{deng2016survey,chakraborty2022}. Also, even among relevant features there may be different degrees of relevance, which should be taken into account. Feature weighting extends clustering algorithms to account for such heterogeneous feature relevance. 

Feature weighting in clustering is a popular research area (see \cite{hancer2020survey,deng2016survey,huang2005ewkm,chakraborty2022}). Existing approaches aim to modify the contribution of each feature to the clustering objective, typically by incorporating weights into the distance function used to define cluster assignments. Formally, they assign a non-negative weight \(\omega_v\) to each feature \(v=1, \ldots, d\). Each \(\omega_v\) represents the degree of relevance of feature \(v\). In this work, we are concerned with the use of such weights within counterfactual generation, rather than with the specific mechanisms by which they are estimated.

\subsection{Counterfactual explanations}
\label{sec:background_counterfactual}

%Changing order, going from general to the specific area of clustering
Optimisation-based methods have been widely adopted to generate valid and diverse counterfactual explanations \cite{guidotti2024counterfactual,prado2024survey}. A prominent example is DiCE \cite{mothilal2020dice}, which formulates counterfactual generation as a multi-objective optimisation problem balancing proximity to the factual instance with diversity across multiple counterfactuals. Alternative approaches generate counterfactuals by moving instances towards representative prototypes or along rule-based paths, trading optimality for interpretability \cite{vanlooveren2021interpretable,ustun2019actionable}. While such methods are flexible and sometimes model-agnostic, they are primarily developed for supervised settings and do not explicitly account for the geometric structure of clustering decision regions.

There has been limited research exploring counterfactual explanations in clustering settings, often in distinct and largely disconnected directions. For instance, counterfactual analysis has been used to quantify the cost of enforcing fairness constraints in $k$-means clustering, comparing similarity-based measures with counterfactual-based costs of reassigning individuals between clusters \cite{karra2025fairness}. This line of work highlights how counterfactuals can reveal asymmetric impacts across groups and identify features acting as proxies for sensitive attributes.

Counterfactuals for Clustering (CFCLUST) \cite{vardakas2025counterfactual} is, to our knowledge, the first formal framework for generating counterfactual explanations for clustering algorithms. It considers both $k$-means clustering and Gaussian mixture models, we focus on the former as our proposed method is grounded in $k$-means (for details see Section \ref{sec:method}). CFCLUST exploits the fact that cluster assignments are determined by proximity to centroids, so that the boundary between two clusters is given by the hyperplane equidistant from the corresponding centroids. Under this geometric formulation, the counterfactual can be obtained by projecting the factual instance onto the separating hyperplane between clusters. This framework proposes valuable practical constraints that are important in real-world settings:
\begin{itemize}
\item \emph{Actionability constraints}, modelled as binary feature masks that allow certain features to remain immutable, reflecting attributes that cannot be changed;
\item \emph{Plausibility constraints}, modelled as a boundary penetration scalar $\epsilon$, move the counterfactual away from the source–target boundary and further into the target side of the pairwise model.
\end{itemize}

Pairwise-boundary counterfactuals for $k$-means rely on the separating hyperplane between a source and target centroid. This is sufficient to describe a binary transition, but in multi-cluster settings it does not in general guarantee membership of the intended target Voronoi cell. In this paper, we extend this geometric perspective by considering the full Voronoi polytope induced by multiple neighbouring centroids, rather than a single pairwise hyperplane, and build on it with additional mechanisms for robustness and interpretability.

\section{Proposed Method}
\label{sec:method}
%\label{sec:sec3_intro} No reason for two labels

This section presents our proposed framework, VoICE, for generating counterfactual explanations in a clustering setting using a feature-weighted version of $k$-means. We begin by introducing weighted Voronoi regions as target counterfactual sets, followed by the construction of bounded feasible target regions and their robustness refinement via homothetic contraction. We then formulate counterfactual generation as a constrained optimisation problem over these regions, before introducing parsimonious counterfactual explanations. Finally, we summarise our proposed method.

\subsection{Voronoi regions as target clusters}
\label{subsec:weighted_voronoi}

%Don't introduce unnecessary notation. Too much notation = paper difficult to read
We model counterfactual generation in a feature-weighted $k$-means scenario as an optimisation problem over the decision region of a target cluster. Here, we are not particularly interested in what exact algorithm is being used as long as it generates a feature weight vector \(\omega = (\omega_1, \ldots, \omega_d)\), where each of its components, \(\omega_v\), denotes the weight associated with a feature \(v\) and \(\omega_v \geq 0\). With this we can define the weighted squared Euclidean distance
\[
D_{\omega}(x_i,m_j)=\sum_{v=1}^d \omega_v (x_{iv}-m_{jv})^2,
\]
for any \(x_i \in X\) and centroid \(m_j\). We can then represent the decision region of a cluster \(C_t\) with centroid \(m_t\) as the Voronoi cell
\[
V_t^{(\omega)}=
\left\{
x\in \mathbb{R}^d : D_{\omega}(x,m_t)\le D_{\omega}(x,m_j)\;\forall j\neq t
\right\}.
\]

Expanding the inequalities and cancelling quadratic terms yields the half-space representation
\begin{equation}
\label{eq:weighted_voronoi_halfspace}
\begin{aligned}
V_t^{(\omega)}
&=
\bigcap_{j\neq t}
\Biggl\{
x \;:\; (\Omega(m_j-m_t))^\top x \\
&\qquad\le \frac{1}{2}\Bigl(
m_j^\top\Omega m_j - m_t^\top\Omega m_t
\Bigr)
\Biggr\},
\end{aligned}
\end{equation}
where, $\Omega = \mathrm{diag}(\omega_1,\ldots,\omega_d)$. Thus, the target cluster is represented by a convex polyhedral region in feature space, and counterfactual generation can be formulated as an optimisation problem over this region.

The half-space representation in \eqref{eq:weighted_voronoi_halfspace} involves $k-1$ inequalities. However, only those inequalities corresponding to facet-sharing neighbours are required to define \(V_t^{(\omega)}\). Rather than incurring the computational cost of constructing a complete diagram, the Voronoi region of a single centroid \(m_t\) can be obtained by intersecting the half-spaces defined by pairwise bisectors with respect to a subset of relevant neighboring clusters~\cite{aurenhammer1991voronoi}. 

These neighbours can be efficiently identified using properties of the corresponding Delaunay triangulation (the dual graph of the Voronoi diagram, in which centroids are connected if their Voronoi cells share a boundary)~\cite{delaunay1934sphere,lee1980twodimensional}, thereby avoiding redundant comparisons with distant centroids that do not influence the cell boundary. As a result, localised Voronoi cell construction offers a computationally efficient alternative, with complexity that depends primarily on the number of effective neighbours rather than \(k\).

In the unweighted two-cluster case, with all features actionable, no contraction, and no additional bounded-domain restriction, projection onto the target Voronoi region reduces to projection onto the source–target bisector. Thus, in this special case, the proposed formulation recovers the $k$-means CFCLUST construction. For $k>2$, VoICE extends the pairwise construction to the full target Voronoi region.

\subsection{Bounded feasible target regions}
\label{subsec:bounded_feasible_regions}

Voronoi cells in $k$-means clustering may be unbounded, as centroids lying on the convex hull of the configuration induce regions that extend infinitely in directions where no competing centroid is closer. As a result, these regions can include parts of the feature space that are not supported by the observed data. To reduce extrapolation beyond the observed feature ranges, we first restrict the target Voronoi cell to a bounded, empirically defined feasible domain. Let $\mathcal{F} \subseteq \mathbb{R}^d$ denote a feasible region derived from observed data (i.e.\ feature-wise bounds). We define the bounded target region as
\[
\tilde V_t^{(\omega)} = V_t^{(\omega)} \cap \mathcal{F}.
\]
Notice that \(\mathcal{F}\) is a polyhedron. Since \(V_t^{(\omega)}\) is also a polyhedron by \eqref{eq:weighted_voronoi_halfspace}, their intersection \(\tilde V_t^{(\omega)}\) is a polyhedron and admits a half-space representation of the form
\[
\tilde V_t^{(\omega)} = \{x \in \mathbb{R}^d : Ax \le b\}.
\]

To improve robustness and reduce sensitivity near cluster boundaries, we apply a homothetic contraction, i.e., a uniform scaling of the region towards its centre (the cluster centroid, \(m_t\)), to this bounded region. 
For $\alpha \in (0,1]$, define
\[
\tilde V_t^{(\omega,\alpha)}
=
\left\{
m_t + \alpha(x - m_t) : x \in \tilde V_t^{(\omega)}
\right\}.
\]

Expressing \(Ax\leq b\) in terms of \(x = m_t+\alpha(y-m_t) \in \tilde V_t^{(\omega,\alpha)}\) for some \(y \in \tilde V_t^{(\omega)}\) leads to

\begin{equation}
\tilde V_t^{(\omega,\alpha)}
=
\{x \in \mathbb{R}^d : Ax \le (1-\alpha)Am_t + \alpha b\},
\label{eq:contracted_halfspace}
\end{equation}
showing the contraction scales each constraint towards the centroid $m_t$. Figure~\ref{fig:homothetic_contraction} illustrates this contraction. Smaller values of $\alpha$ yield stronger contraction, increasing the geometric margin from decision boundaries at the cost of excluding boundary-adjacent observations, while larger values retain a greater proportion of the original Voronoi region, with $\alpha=1$ recovering the original bounded cell.

\begin{figure}[t]
    \centering
    \includegraphics[width=0.75\linewidth]{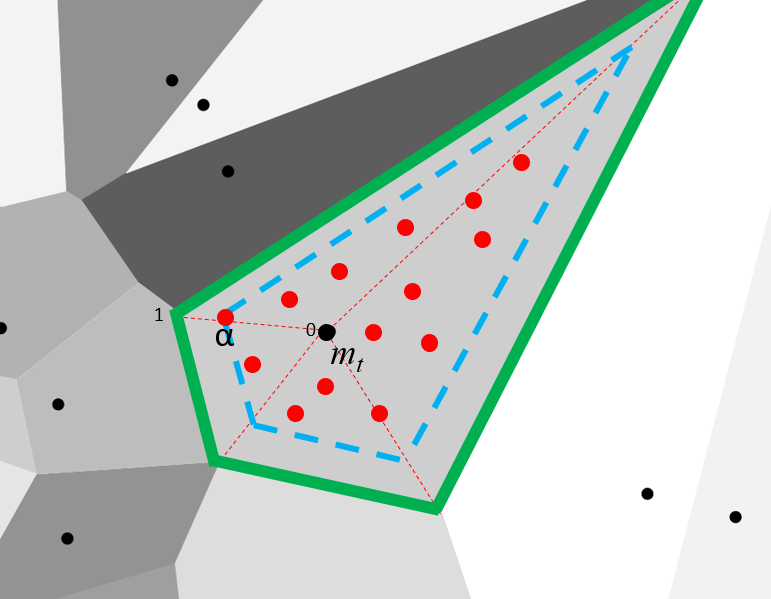}
    \caption{
    Homothetic contraction of a Voronoi cell. Green denotes the Voronoi region $\widetilde V_t^{(\omega)}$, while the dashed blue denotes the contracted region $\widetilde V_t^{(\omega,\alpha)}$ . The red points are observations in cluster \(C_t\). 
    }
    \label{fig:homothetic_contraction}
\end{figure}

For each \(x_i\in C_t\), define the point-wise contraction score
\[
\alpha_{i,t}
=
\inf\{\alpha\in[0,1]:x_i\in\widetilde V_t^{(\omega,\alpha)}\}.
\]

This score records how much of the bounded target region must be retained in order for observation \(x_i\) to remain inside the contracted region. The cluster-level full-retention factor is then
\[
\alpha_t^\star=\max_{x_i\in C_t}\alpha_{i,t}.
\]
Thus, \(\alpha_t^\star\) (the smallest contraction factor retaining all observed members of \(C_t\)) is determined by the most boundary-adjacent retained member of cluster \(C_t\), while the distribution of the point-wise scores \(\alpha_{i,t}\) provides a diagnostic summary of how centrally or peripherally
the assigned observations lie within the bounded target region. These point-wise scores are used later as contraction diagnostics.

The empirical interpretation of $\alpha$ depends on the metric used to construct the region. When feature weights are highly concentrated, $\alpha$ may describe compactness in a low-dimensional weighted subspace rather than compactness in the original feature space. We therefore report feature-weight concentration alongside $\alpha$-based contraction statistics (see Section~\ref{sec:results}).

\subsection{Counterfactual optimisation}
\label{subsec:cf_optimisation}

Let \(a\) be a factual observation assigned to a source cluster \(C_s\), and \(C_t\) be a target cluster with \(t \neq s\). Let \(A\subseteq\{1,\ldots,d\}\) denote the set of actionable features. When actionability constraints are imposed, features outside $A$ are held fixed at their factual values. We denote the corresponding affine subspace by
\[
E_A(a) = \{z \in \mathbb{R}^d : z_j = a_j \ \forall j \notin A\}.
\]

The optimisation is therefore carried out over the feasible target region, \(\widetilde{V}^{(\omega,\alpha_t^\star)}_t\), together with the equality constraints in \(E_A(a)\). 
%
%We denote the robust feasible target region associated with \(C_t\) by
%\[
%\mathcal{R}_t
%=
%\widetilde{V}^{(\omega,\alpha_t^\star)}_t .
%\]
%

To ensure uniqueness even when some feature weights are zero or negligible, we use the regularised weighted objective
\[
D_{\omega,\delta}(a,z)
=
D_\omega(a,z)+\delta\|z-a\|_2^2,
\qquad \delta>0,
\]
where \(\delta\) is chosen small enough to act only as a numerical tie-breaker. The counterfactual for \(a \in C_s\) with respect to target cluster \(C_t\) is then
\[ 
z_t^\star 
= 
\arg\min_{z\in \widetilde{V}^{(\omega,\alpha_t^\star)}_t\cap E_A(a)} 
D_{\omega,\delta}(a,z). 
\tag{3} 
\]

This incorporates feature weights directly into the optimisation objective while retaining a strictly convex projection problem.

If the target cluster \(C_t\) is not specified in advance, we solve (3) for each candidate target \(t\neq s\) and select the counterfactual with minimum cost among feasible solutions. 

\subsection{Parsimonious counterfactuals}
\label{subsec:parsimonious_counterfactuals}

While the optimisation problem defined in Section~\ref{subsec:cf_optimisation} identifies a feasible counterfactual explanation, practical interpretability often requires explanations that modify as few actionable features as possible. To address this requirement, our proposed framework incorporates ranked feature subsets and searches for the most parsimonious feasible counterfactual.

Let $S_r$ denote the set of the $r$ highest-ranked actionable features, ranked in descending order of $\omega_v$, with $r \leq |A|$. The set $S_r$ represents the subset of actionable features permitted to vary during optimisation, with all remaining features fixed at their factual values.

For a given target cluster \(C_t\), let \(z_t^{(r)}\) denote the solution of the counterfactual optimisation problem when variation is restricted to features in \(S_r\). If no feasible solution exists for a given \(r\), the corresponding optimisation problem is considered infeasible. The proposed framework searches the ranked prefixes sequentially until either a feasible counterfactual is obtained or all actionable prefixes have been exhausted. That is, it starts with \(S_1\), incrementing \(r\) until a feasible solution is found or \(r=|A|\) has been tested. When at least one feasible prefix exists, this yields the smallest actionable feature subset capable of producing a valid counterfactual. This allows us to define the \emph{minimal intervention cardinality}
\[
r_t^{\star} = \min\{r \in \{1,\ldots,|A|\} \mid z_t^{(r)} \text{ is feasible}\}.
\]
If no feasible prefix exists, then no parsimonious counterfactual is returned for the corresponding factual--target pair.

The quantity \(r_t^{\star}\), when defined, represents the smallest number of ranked actionable features required to generate a feasible counterfactual explanation with target cluster \(C_t\). Consequently, lower values of $r_t^\star$ correspond to more parsimonious explanations, while larger values indicate that the desired cluster transition requires variation across a broader set of actionable features.

The associated counterfactual,
\[
z_t^\star = z_t^{(r_t^\star)},
\]
is referred to as the \emph{most parsimonious counterfactual explanation}. 

The minimal intervention cardinality provides a natural measure of explanation complexity and forms the basis of the model-level parsimony metrics introduced in Section~\ref{sec:evaluation_metrics}. When the target cluster is clear from context, or when results are aggregated across sampled factual--target pairs, we write this quantity as \(r^\star\). Aggregated statistics such as the mean and median values of \(r^\star\) are computed over feasible factual--target pairs and quantify the average number of ranked actionable features required to obtain feasible counterfactual explanations across a dataset.

\subsection{Directional counterfactual ranges}
\label{subsec:directional_ranges}

The optimisation problems above identify a minimal feasible counterfactual point. However, once such a point has been found, it is also useful to characterise how far one may continue to move in the same counterfactual direction while remaining feasible. Let \(z_t^\star\) be a feasible counterfactual for factual observation \(a\), and define the counterfactual direction 
\[ 
d_t=z_t^\star-a. 
\] 
For \(\lambda\geq0\), consider the ray 
\[
z_t(\lambda)=a+\lambda d_t. 
\] 
By construction, \(z_t(1)=z_t^\star\). The admissible directional range is 
\[ 
\mathcal I_t(a,d_t) = \{\lambda\geq1:z_t(\lambda)\in \widetilde{V}^{(\omega,\alpha_t^\star)}_t\cap E_A(a)\}. 
\] 
Since \(\widetilde{V}^{(\omega,\alpha_t^\star)}_t\cap E_A(a)\) is convex, this set is an interval of the form 
\[ 
[1,\rho_t], 
\] whenever \(z_t^\star\in \widetilde{V}^{(\omega,\alpha_t^\star)}_t\cap E_A(a)\). The value \(\rho_t\) is the largest admissible intervention magnitude along the same direction before the path exits the contracted feasible target region or violates the actionability constraints. The associated directional tolerance is 
\[
\tau_t=\rho_t-1. 
\] 
Larger values of \(\tau_t\) indicate that the counterfactual remains feasible over a wider range of intervention magnitudes along the same direction, while values close to zero indicate that the minimal counterfactual lies close to a limiting boundary.

\subsection{Algorithmic summary}
\label{subsec:algorithmic_summary}

We now summarise the proposed framework through the following algorithms.

\paragraph{Counterfactuals as Voronoi projections.}
This baseline formulation computes counterfactuals by projecting the factual observation onto the Voronoi cell of a target cluster under Euclidean distance.

\begin{algorithm}[H]
\caption{VoICE (unweighted)}
\label{alg:baseline_cf}
\begin{algorithmic}[1]
\Require Factual observation \(a\), source cluster \(C_s\),
centroids \(\{m_1,\ldots,m_k\}\), optional target cluster index \(t\)
\Ensure Counterfactual $z^\star$

\If{$t$ is specified}
    \State $\mathcal{T}\leftarrow \{t\}$
\Else
    \State $\mathcal{T}\leftarrow \{1,\ldots,k\}\setminus\{s\}$
\EndIf

\State Initialise candidate set $\mathcal{Z}\leftarrow\emptyset$

\For{each $t\in\mathcal{T}$}
    \State Construct Voronoi cell $V_t$
    \State Solve projection $z_t^\star = \arg\min_{z \in V_t} \|z-a\|^2$
    \State Add $z_t^\star$ to $\mathcal{Z}$
\EndFor
\State \textbf{return} $z^\star = \arg\min_{z\in\mathcal{Z}} \|z-a\|^2$
\end{algorithmic}
\end{algorithm}

\paragraph{Robust weighted counterfactuals.}
This proposed method extends the previous baseline by incorporating feature weights, restricting the search to an empirically bounded feasible domain, and improving robustness through homothetic contraction of the target region.

\begin{algorithm}[H]
\caption{VoICE: Robust weighted counterfactuals}
\label{alg:voice_cf}
\begin{algorithmic}[1]
\Require Factual observation \(a\), source cluster \(C_s\), centroids \(\{m_1,\ldots,m_k\}\), feature weights \(\omega\), regularisation parameter \(\delta>0\), feasible domain \(\mathcal{F}\), actionable feature set \(A\), optional target cluster index \(t\)
\Ensure Counterfactual $z^\star$ or failure

\If{$t$ is specified}
    \State $\mathcal{T}\leftarrow \{t\}$
\Else
    \State $\mathcal{T}\leftarrow \{1,\ldots,k\}\setminus\{s\}$
\EndIf

\State Initialise candidate set $\mathcal{Z}\leftarrow\emptyset$

\For{each $t\in\mathcal{T}$}
    \State Construct the weighted Voronoi cell $V_t^{(\omega)}$
    \State Set $\tilde V_t^{(\omega)} = V_t^{(\omega)} \cap \mathcal{F}$
    \State Compute contracted region $\tilde V_t^{(\omega,\alpha_t^\star)}$
    \State Solve projection 
    \[ 
    z_t^\star 
    = 
    \arg\min_{z\in \tilde V_t^{(\omega,\alpha_t^\star)}\cap E_A(a)} 
    \{D_{\omega,\delta}(a,z)\}. 
    \]
    \If{feasible}
        \State Add $z_t^\star$ to $\mathcal{Z}$
    \EndIf
\EndFor

\If{$\mathcal{Z}=\emptyset$}
    \State \textbf{return} failure
\Else
    \State \textbf{return}
    \(
    z^\star =
    \arg\min_{z_t^\star\in\mathcal{Z}}
    \{D_{\omega,\delta}(a,z_t^\star)\}
    \)
\EndIf
\end{algorithmic}
\end{algorithm}

We now show that, for a fixed target cluster \(C_t\), the regularised projection problem solved within Algorithm~\ref{alg:voice_cf} admits a unique solution.

\begin{proposition}
Let $\widetilde V_t^{(\omega,\alpha_t^\star)}\cap E_A(a) \neq \emptyset$. Then, 
\[
z_t^\star
=
\arg\min_{z\in \widetilde V_t^{(\omega,\alpha_t^\star)}\cap E_A(a)}
D_{\omega,\delta}(a,z)
\]
admits a unique solution.
\end{proposition}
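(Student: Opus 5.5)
The plan is the standard argument for projecting onto a nonempty closed convex set under a strongly convex, coercive objective. I will first establish that the feasible set $K := \widetilde V_t^{(\omega,\alpha_t^\star)}\cap E_A(a)$ is nonempty, closed and convex. Nonemptiness is the hypothesis. By \eqref{eq:contracted_halfspace}, $\widetilde V_t^{(\omega,\alpha_t^\star)}$ is the solution set of finitely many linear inequalities, $Ax \le (1-\alpha_t^\star)Am_t + \alpha_t^\star b$. The set $E_A(a)$ is an affine subspace, given by the linear equalities $z_j=a_j$ for $j\notin A$. Both are therefore closed convex polyhedra, and so is their intersection $K$. (In fact $\widetilde V_t^{(\omega)}\subseteq\mathcal F$ is bounded, so $K$ is compact, but I will not need this.)

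Next I will show that the objective is strongly convex. Write
\[
D_{\omega,\delta}(a,z)=(z-a)^\top(\Omega+\delta I)(z-a).
\]
Since every $\omega_v\ge 0$ and $\delta>0$, the matrix $\Omega+\delta I$ is diagonal with entries at least $\delta>0$, hence positive definite. So $z\mapsto D_{\omega,\delta}(a,z)$ is a strongly convex quadratic with Hessian $2(\Omega+\delta I)\succeq 2\delta I$. It is also continuous and coercive, since $D_{\omega,\delta}(a,z)\ge\delta\|z-a\|_2^2$. This is the step where the regulariser is needed: without $\delta$, zero weights would make $\Omega$ only positive semidefinite, and uniqueness could fail along directions $v$ with $\omega_v=0$.

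Existence then follows from a compactness argument. Pick any $z_0\in K$. The sublevel set $\{z\in K: D_{\omega,\delta}(a,z)\le D_{\omega,\delta}(a,z_0)\}$ is closed, and by coercivity it is bounded, hence compact and nonempty. The continuous objective attains its minimum on it, and that point minimises over all of $K$.

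For uniqueness, suppose $z_1\neq z_2$ are both minimisers with common value $p$. By convexity of $K$, the midpoint $\bar z=\tfrac12(z_1+z_2)$ lies in $K$. Strict convexity of a positive definite quadratic gives
\[
D_{\omega,\delta}(a,\bar z)=\tfrac12 D_{\omega,\delta}(a,z_1)+\tfrac12 D_{\omega,\delta}(a,z_2)-\tfrac14 (z_1-z_2)^\top(\Omega+\delta I)(z_1-z_2)<p,
\]
which contradicts minimality. Hence $z_t^\star$ is unique.

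There is no real obstacle here. The only points needing care are closedness and convexity of the contracted region, which are immediate from its half-space form \eqref{eq:contracted_halfspace}, and the observation that $\delta>0$ is what guarantees positive definiteness of $\Omega+\delta I$.
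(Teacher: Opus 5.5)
Your proof is correct and has the same structure as the paper's: existence via a Weierstrass argument for the continuous objective, and uniqueness from strict convexity because $\Omega+\delta I$ is positive definite (your midpoint identity just makes the paper's one-line appeal to strict convexity explicit). The one difference is in existence: the paper uses compactness of $\widetilde V_t^{(\omega,\alpha_t^\star)}\cap E_A(a)$, which it inherits from the bounded box $\mathcal F$, whereas you use coercivity, $D_{\omega,\delta}(a,z)\ge\delta\|z-a\|_2^2$, on a set that need only be closed and convex, so your version would still hold if $\mathcal F$ were replaced by an unbounded polyhedral domain.
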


\begin{proof}
The region \(\widetilde{V}^{(\omega,\alpha_t^\star)}_t\) is the intersection of finitely many closed half-spaces and feature-wise bounds, followed by homothetic contraction. Hence, \(\widetilde V_t^{(\omega,\alpha_t^\star)}\) is compact and convex. The set \(E_A(a)\) is an affine subspace, so \(\widetilde V_t^{(\omega,\alpha_t^\star)}\cap E_A(a)\) is also compact and convex. 

The objective \(D_{\omega,\delta}(a,z)\) is continuous on \(\widetilde V_t^{(\omega,\alpha_t^\star)}\cap E_A(a)\). Since \(\widetilde V_t^{(\omega,\alpha_t^\star)}\cap E_A(a)\) is compact and non-empty, a minimiser exists. Furthermore,
\[
D_{\omega,\delta}(a,z)
=
(z-a)^\top(\Omega+\delta I)(z-a).
\]
Because \(\omega_v\geq 0\) for all \(v\) and \(\delta>0\), the matrix
\(\Omega+\delta I\) is positive definite. Hence
\(D_{\omega,\delta}(a,z)\) is strictly convex in \(z\). A strictly convex
function has at most one minimiser over a convex set. Therefore the minimiser is
unique.
\end{proof}

\paragraph{A parsimonious extension}
The proposed framework admits a natural parsimonious extension, which searches for the minimal subset of actionable features required to produce a feasible counterfactual. Starting from the single most important actionable feature, the method incrementally adds features in descending order of importance until a feasible solution is found, terminating in at most $|A|$ steps (see Section~\ref{subsec:parsimonious_counterfactuals} for details). Feature importance is determined by the non-negative weight, \(\omega_v\), assigned to each actionable feature. 

\section{Experimental Setup}
\label{sec:experimental_evaluation}

This section describes the experimental methodology we use to evaluate our proposed framework.

\subsection{Comparison with CFCLUST}

We compare VoICE (unweighted), described in Algorithm \ref{alg:baseline_cf}, against CFCLUST (for details, see Section \ref{sec:background_counterfactual}), to our knowledge the only existing framework for counterfactual explanations in $k$-means clustering. Following the original experiments introducing CFCLUST~\cite{vardakas2025counterfactual}, we generate counterfactuals for 50 randomly selected factual observations from the source cluster, with the target cluster also selected at random. The underlying clustering model is unweighted $k$-means, with ranked feature restrictions and $\alpha$-contraction excluded, so that the comparison isolates the geometric difference between the two feasible regions.

For each factual observation $a$ assigned to source cluster $C_s$ and each selected target cluster $C_t$, we generated two counterfactuals under this shared setup, using the same squared Euclidean cost and no plausibility shift. The CFCLUST counterfactual was obtained by projecting $a$ onto the pairwise bisecting hyperplane between the centroids $m_s$ and $m_t$, while the VoICE (unweighted) counterfactual was obtained by projecting $a$ onto the full target Voronoi cell $V_t$.

In our experiments, we measure validity. That is, the percentage of counterfactuals lying inside the target \(V_t\). Given VoICE (unweighted) optimises directly over $V_t$, its validity is guaranteed to be 100\% by construction regardless of the number of clusters \(k\). Our comparison therefore quantifies how often the CFCLUST counterfactual fails this same criterion. Where it does, we report repair cost, the squared distance from the invalid counterfactual to the corresponding valid VoICE counterfactual, measuring the extent to which CFCLUST can report an apparently lower-cost counterfactual by stopping outside the true target Voronoi region when $k>2$.

\subsection{Clustering and weighting modes}

The proposed framework operates on clustering structures induced by $k$-means, with feature weights optionally incorporated to reflect the relative importance of each dimension in the cluster geometry. In our experiments, weights are obtained using Shapley Reweighted $k$-means (SHARK) \cite{fawley2026shapley}, which decomposes the $k$-means objective into per-feature within-cluster dispersion contributions and assigns weights inversely related to them, down-weighting noisy or high-variance features. These weights are incorporated into the weighted distance function defined in Section~\ref{sec:method}. We adopt SHARK because it introduces no additional hyperparameters beyond those required by $k$-means, preserves the computational complexity of standard $k$-means, and aligns naturally with the geometric formulation used here. However, our framework is not restricted to SHARK. Any method producing non-negative feature weights may be used, including entropy-based \cite{huang2005ewkm}, regularisation-based \cite{chakraborty2022}, or adaptive weighting schemes \cite{liu2016fwsa}.

We evaluate three modes. In Unweighted $k$-means, equal feature weights are used for clustering, ranking, and counterfactual optimisation. In Ranked $k$-means, $k$-means centroids and equal-weight Voronoi geometry are retained, but SHARK-derived weights rank features for the parsimonious search. In Ranked + weighted SHARK, SHARK weights are used throughout, for clustering, ranking, Voronoi geometry, and the counterfactual objective.

Note that Unweighted $k$-means differs from VoICE (unweighted, Algorithm~\ref{alg:baseline_cf}). The former retains the full feasibility bounds, contraction, and optimisation machinery of Algorithm~\ref{alg:voice_cf}, simply with uniform feature weights, whereas the latter omits these mechanisms entirely.

Given feature weighting can alter centroid locations, Voronoi geometry, and cluster assignments, the same factual observation may be assigned to different source clusters, and the corresponding target regions may differ, across clustering formulations. Comparisons between clustering formulations should therefore be interpreted as comparisons between complete modelling pipelines, rather than as comparisons of explanation mechanisms applied to the same underlying clustering.

\subsection{Feasibility and actionability}

Counterfactual generation is constrained to a feasible region encoding plausibility and actionability. In our experiments, $\mathcal{F}$ is constructed using feature-wise bounds derived from the observed data, $\mathcal{F} = [\min(X), \max(X)]$, taken component-wise, which keeps counterfactuals within the empirical range of each feature and avoids extrapolation. This construction is simple and computationally efficient; alternatives such as convex hull approximations or neighbourhood-based constraints may be used depending on the application.

Actionability is incorporated through binary feature masks defined by the semantic interpretation of each feature, for example holding demographic attributes such as age immutable while allowing financial attributes to vary, formally enforced via the affine constraint $E_A(a)$ from Section~\ref{subsec:cf_optimisation}. For each dataset we consider both fully actionable and partially constrained masks. For non-interventional datasets, such masks should be interpreted as restrictions on profile-shift explanations rather than claims about real-world recourse. For mixed-type diagnostic datasets, semantic masks are reported descriptively, but categorical and one-hot encoded variables require additional validity constraints beyond the present weighted Euclidean formulation.

\subsection{Datasets}
\label{sec:datasets}

We evaluate the proposed framework using a two-tier dataset strategy, summarised in Table~\ref{tab:datasets} (including illustrative benchmark datasets present in the recent literature~\cite{vardakas2025counterfactual}). The first tier contains the primary evaluation datasets used for the main quantitative comparison. These datasets were selected because the weighted geometry remains sufficiently well behaved for the resulting counterfactuals, contraction statistics, and feature-ranking behaviour to be interpreted as properties of a meaningful multivariate clustering model. The second tier contains mixed-type diagnostic stress-test datasets. These datasets are retained to characterise failure modes of feature-weighted Euclidean geometry, particularly cases where feature weights collapse onto one or two binary or one-hot encoded indicators.

\begin{table}[!t] 
\centering 
\scriptsize 
\caption{Dataset strategy used in the evaluation. Primary datasets are used for the main quantitative comparison, while diagnostic stress-test datasets are retained to analyse feature-weight concentration and mixed-type degeneracy.} 
\label{tab:datasets} 
\begin{tabular}{lcccc} 
\toprule 
&&& & \textbf{Degeneracy} \\ 
\textbf{Dataset} & \textbf{$n$} & \textbf{$d$} & \textbf{$k$}   & \textbf{risk}\\
\midrule 
\multicolumn{5}{l}{\textit{Evaluation role: Primary}}\\
Iris~\cite{fisher_iris} & 150 & 4 & 3 & Low \\ 
Wine~\cite{aeberhard_wine} & 178 & 13 & 3 &Low \\ 
Palmer Penguins~\cite{horst2020palmerpenguins} & 342 & 4 & 3 & Low \\
Breast Cancer~\cite{street1993nuclear} & 569 & 30 & 2  & Low \\ 
Wholesale Cust.~\cite{uci_wholesale_customers}$^{a}$ & 440 & 6 & 3 & Moderate \\ 
\midrule 
\multicolumn{5}{l}{\textit{Evaluation role: Diagnostic stress test}}\\
Diabetes~\cite{cdc_diabetes_health_indicators} & 253,680 & 20 & 3 & High\\ 
Obesity~\cite{obesity_dataset_uci} & 2,111 & 22 & 7  & High \\ 
German Credit~\cite{german_credit_data} & 1,000 & 48 & 2  & High \\ 
Heart Failure~\cite{uci_heart_failure_2020} & 299 & 12 & 2  & High \\ 
Student Perf.~\cite{cortez2008student} & 649 & 39 & 5  & High \\ 
\bottomrule 
\end{tabular} 
\begin{tablenotes}[flushleft]
\footnotesize
\item $^{a}$Channel and region were excluded from the data, while region was retained as the ground-truth label.
\end{tablenotes}
\end{table}

\begin{table}[!t]
\centering
\scriptsize
\setlength{\tabcolsep}{5pt}
\renewcommand{\arraystretch}{1.05}
\caption{Clustering-solution diagnostics for the experimental datasets. ARI is reported only where external reference labels are available and is used as an external diagnostic rather than an optimisation criterion.}
\label{tab:clustering_diagnostics}
\begin{tabular}{lcccc}
\toprule
&\multicolumn{2}{c}{\textbf{$k$-means}} &\multicolumn{2}{c}{\textbf{SHARK}}\\
\cmidrule(lr){2-3}
\cmidrule(lr){4-5}
\textbf{Dataset} &
\textbf{ARI} &
\textbf{Inertia} &
\textbf{ARI} &
\textbf{Inertia} \\
\midrule

Iris
& 0.620
& 140.97
& 0.886
& 15.87 \\

Wine
& 0.897
& 1,277.93
& 0.822
& 81.15 \\

Palmer Penguins
& 0.793
& 379.39
& 0.609
& 86.41 \\

Breast Cancer
& 0.654
& 11,595.53
& 0.718
& 341.58 \\

Wholesale Customers
& 0.028
& 1,608.43
& 0.000
& 211.09\\

\midrule

Diabetes
& 0.149
& 4,221,626.00
& 0.167
& $1.90 \times 10^{-11}$$^{a}$ \\

Obesity
& 0.101
& 29,808.39
& 0.156
& $1.00 \times 10^{-11}$ $^{a}$\\

German Credit
& 0.000
& 45,797.93
& 0.047
& $4.70 \times 10^{-11}$ $^{a}$\\

Heart Failure
& -0.004
& 3,201.12
& -0.005
& $1.10 \times 10^{-11}$ $^{a}$ \\

Student Performance
& 0.027
& 21,476.45
& 0.036
& $1.85 \times 10^{-11}$ $^{a}$\\

\bottomrule
\end{tabular}
\begin{tablenotes}[flushleft]
\footnotesize
\item $^{a}$ The near-zero SHARK inertia indicates degeneracy caused by extreme feature-weight concentration, not near-perfect multivariate clustering.
\end{tablenotes}
\end{table}

The primary evaluation datasets (see Table~\ref{tab:datasets}) are well-understood and have low- to moderate-dimensional structure. This enables qualitative analysis of Voronoi geometry, homothetic contraction, and directional counterfactual ranges. Palmer Penguins provides a similarly interpretable morphological dataset with partial cluster overlap. Breast Cancer provides a higher-dimensional numerical benchmark, making it useful for evaluating feature-normalised parsimony. Wholesale Customers provides a behavioural segmentation setting based on six annual-spending variables. Channel and Region were excluded from the clustering matrix, while Region was retained separately as the ground-truth label for external evaluation. The clustering used $k=3$, corresponding to the three Region classes. Counterfactuals therefore represent changes in purchasing profiles rather than changes in geographical region or customer channel.

The diagnostic stress-test datasets contain binary, ordinal, or one-hot encoded variables with direct semantic interpretation. Although such datasets are relevant to practical recourse settings, they also expose an important limitation of the present weighted Euclidean formulation. Under SHARK weighting, several of these datasets exhibit severe feature-weight concentration, causing the effective geometry to collapse onto a single binary feature or a small categorical block. In such cases, near-zero contraction values should not be interpreted as evidence of compact multivariate cluster structure, but rather as a diagnostic indication of feature-weight degeneracy.

Accordingly, primary aggregate results are reported on the first tier of datasets, with the diagnostic stress-test results reported separately in Section~\ref{sec:results}.

\subsection{Evaluation metrics}
\label{sec:evaluation_metrics} 

We evaluate the proposed framework using metrics designed to assess validity, feasibility, intervention cost, parsimony, directional robustness, and feature-weight behaviour. 

\paragraph{Validity.} A counterfactual is considered valid if it is assigned to the intended target cluster under the clustering function used by the corresponding mode. We report the proportion of generated counterfactuals that satisfy this condition. 

\paragraph{Feasibility.} We measure feasibility as the proportion of sampled factual--target pairs for which a feasible counterfactual can be generated under the specified actionability and feasibility constraints. 

\paragraph{Distance and cost.} We evaluate perturbation magnitude using the weighted squared distance $D_\omega(a,z^\star)$ between the factual observation $a$ and the counterfactual $z^\star$, reflecting intervention cost in the weighted feature space. 

\paragraph{Sparsity and parsimony.} We evaluate explanation parsimony using two complementary measures: the number of feature values differing between the factual observation and the counterfactual (with a feature counted as changed when \(|z^\star_j-a_j|>\varepsilon\) for small tolerance \(\varepsilon\)), and the minimal intervention cardinality \(r^\star\) (Section~\ref{subsec:parsimonious_counterfactuals}). To compare datasets of different dimensionality, we additionally report the feature-normalised cardinality \(r^\star/d\).

\paragraph{Directional tolerance.} We report the directional tolerance \(\tau=\rho-1\) (see Section~\ref{subsec:directional_ranges}), which quantifies the range of intervention magnitudes for which the counterfactual direction remains admissible.

\paragraph{Contraction diagnostics.} For each cluster, we report the full-retention contraction factor \(\alpha^\star_t\) (see Section~\ref{subsec:bounded_feasible_regions}), together with the minimum, median, and maximum point-wise contraction scores \(\alpha_{i,t}\). The median score helps distinguish clusters whose members are generally boundary-adjacent from clusters with only a small number of boundary-adjacent observations.

\paragraph{Feature-weight concentration.} Because feature weights determine the weighted Voronoi geometry, we report feature-weight concentration diagnostics, including the maximum feature weight, the effective number of active features, and the dominant feature or feature block. These diagnostics are used to identify cases where the weighted geometry collapses onto one or two binary or one-hot encoded indicators. 

For feature weights normalised so that \(\sum_{j=1}^d\omega_j=1\), we define the effective number of active features as \[ d_{\mathrm{eff}} = \frac{1}{\sum_{j=1}^d\omega_j^2}. \] This quantity equals \(d\) under uniform normalised feature weights and approaches 1 when the weighted geometry collapses onto a single feature.

\paragraph{External clustering agreement.} Where external labels are available, we report the Adjusted Rand Index (ARI) as a diagnostic measure of agreement between the clustering solution and the reference labels. ARI was not used to optimise centroid locations, feature weights, or counterfactuals,  but it provides useful context when interpreting the resulting cluster structure.

\section{Results and discussion}
\label{sec:results}

This section analyses the results of our experiments, following the sampling procedure described in Section~\ref{sec:experimental_evaluation}, repeated across all clustering modes, actionability masks, and contraction settings.

\subsection{Validity comparison}
\label{sec:geometric_validity_results}

Table~\ref{tab:vardakas_voronoi_validity_main} compares pairwise bisecting-hyperplane counterfactuals of CFCLUST with the VoICE unweighted (see Algorithm~\ref{alg:baseline_cf}). VoICE achieves 100.0\% validity by construction, since it optimises directly over the target Voronoi cell. This experiment tests whether CFCLUST also satisfies this criterion when more than two clusters are present, noting that for two-cluster solutions the pairwise bisector and the target Voronoi cell coincide.

Across $26{,}335{,}500$ factual--target comparisons over ten datasets, CFCLUST produced $21{,}221{,}926$ valid counterfactuals and $5{,}113{,}574$ invalid, corresponding to pooled validity of $80.58\%$ and a failure rate of $19.42\%$. Because Diabetes contributes a substantial majority of all comparisons, the macro-averaged validity across datasets was also calculated and was 83.26\%. 

Among invalid counterfactuals, the failure-weighted mean repair cost was 0.870, and the mean relative cost underestimation was 15.18\%. For Wholesale Customers, the corresponding relative underestimation was 21.86\%.

\begin{table*}[!t]
\centering
\scriptsize
\setlength{\tabcolsep}{5pt}
\renewcommand{\arraystretch}{1.05}
\caption{Validity (Val) of CFCLUST against unweighted, least cost Voronoi-region based counterfactuals. All results use 50 independent $k$-means iterations. Comparisons are factual--target pairs generated across those iterations.}
\label{tab:vardakas_voronoi_validity_main}
\begin{tabular}{lrcccc}
\toprule
&
&
\multicolumn{2}{c}{\textbf{CFCLUST}} &
\multicolumn{2}{c}{\textbf{VoICE}} \\
\cmidrule(lr){3-4} \cmidrule(lr){5-6}
\textbf{Dataset}&\textbf{Comparisons} &
\textbf{Val (\%)} &
\textbf{Repair Cost} &
\textbf{Val (\%)} &
\textbf{Repair Cost} \\
\midrule
Iris & 15,000 & 72.6 & 0.776 & 100.0 & 0.000 \\
Wine & 17,800 & 85.9 & 0.460 & 100.0 & 0.000 \\
Palmer Penguins & 34,200 & 73.8 & 0.427 & 100.0 & 0.000 \\
Breast Cancer & 28,450 & 100.0 & 0.000 & 100.0 & 0.000 \\
Wholesale Customers & 44,000 & 78.8 & 1.477 & 100.0 & 0.000 \\
Diabetes & 25,368,000 & 80.8 & 0.887 & 100.0 & 0.000 \\
Obesity & 633,300 & 71.9 & 0.463 & 100.0 & 0.000 \\
German Credit & 50,000 & 100.0 & 0.000 & 100.0 & 0.000 \\
Heart Failure & 14,950 & 100.0 & 0.000 & 100.0 & 0.000 \\
Student Performance & 129,800 & 68.8 & 0.521 & 100.0 & 0.000 \\
\midrule
\textbf{Overall} 
& \textbf{26{,}335{,}500}
& \textbf{80.6} 
& \textbf{0.870}
& \textbf{100.0} 
& \textbf{0.000} \\
\bottomrule
\end{tabular}
\end{table*}

\subsection{VoICE least-cost evaluation}
\label{sec:result_least_cost}

Table~\ref{tab:least_cost_primary} summarises the performance of VoICE (Algorithm~\ref{alg:voice_cf}) across the primary evaluation datasets. Nearly all features change under its least-cost optimisation regardless of dataset (99.4--100.0\%), since the method is not constrained to a small feature subset. Intervention cost varies considerably across datasets, from 0.496 (Breast Cancer) to 7.532 (Wholesale Customers). Notably, directional tolerance does not follow the same ordering as cost. Breast Cancer combines the lowest intervention cost with the highest tolerance, whereas Wholesale Customers combines the highest cost with the lowest mean tolerance. This indicates that low-cost counterfactuals are not necessarily the most robust, so cost and tolerance should be considered jointly rather than relying on either alone.

\begin{table}[!t]
\centering
\scriptsize
\setlength{\tabcolsep}{2pt}
\caption{VoICE (Algorithm~\ref{alg:voice_cf}) performance on the primary evaluation datasets. All features are set as actionable, and results are averaged across the three modes (Unweighted $k$-means, Ranked $k$-means, and Ranked + weighted SHARK). Feasibility was 100.0\% across all datasets and modes.}
\label{tab:least_cost_primary}
\begin{tabular}{lcccc}
\toprule
\textbf{Dataset} &
\textbf{Changed} &
\textbf{Cost}&
\textbf{Mean directional} &
\textbf{Runtime} \\
&\textbf{features}&$\boldsymbol{D_{\omega}(a, z^\star)}$&\textbf{tolerance ($\boldsymbol{\tau})$}&\textbf{(ms)}\\
\midrule
Iris& 99.8\% & 1.258 & 2.144 & 3.1 \\
Wine& 99.4\% & 0.528 & 1.402 & 8.0 \\
Palmer Penguins & 100.0\% & 0.874 & 1.927 & 2.8 \\
Breast Cancer & 100.0\% & 0.496 & 2.812 & 9.4 \\
Wholesale Customers & 99.4\% & 7.532 & 1.271 & 3.8 \\
\bottomrule
\end{tabular}
\end{table}

\subsection{Parsimony evaluation}

Table~\ref{tab:parsimony_primary} summarises the performance of VoICE's most-parsimonious counterfactuals across the primary evaluation datasets. To facilitate comparison across datasets with different dimensionalities, we report both the raw minimal intervention cardinality $r^\star$ and its feature-normalised version $r^\star/d$, where $d$ is the number of features. We do so because raw cardinality can be misleading. For instance, Breast Cancer exhibits the highest mean $r^\star$ but its feature-normalised cardinality is the lowest among all datasets. This indicates that relatively few of its available features are required to achieve a feasible cluster transition.

\begin{table}[!t]
\centering
\scriptsize
\setlength{\tabcolsep}{4pt}
\renewcommand{\arraystretch}{1.05}
\caption{VoICE parsimonious performance on the primary evaluation datasets. All features are set as actionable, results are averaged across the three modes (Unweighted $k$-means, Ranked $k$-means, and Ranked + weighted SHARK), and $r^\star$ is the minimal intervention cardinality. Feasibility was 100.0\% across all datasets and modes.}
\label{tab:parsimony_primary}
\begin{tabular}{lcccc}
\toprule
\textbf{Dataset} &
\textbf{Mean} &
\textbf{$\boldsymbol{r^\star/d}$} &
\textbf{Single-feature} &
\textbf{Mean directional} \\
& $\boldsymbol{r^\star}$&& \textbf{feasibility (\%)}&\textbf{tolerance ($\boldsymbol{\tau})$}\\
\midrule
Iris& 1.79 & 0.448 & 35.3  & 0.697 \\
Wine& 3.28 & 0.252 & 8.0 & 0.222 \\
Palmer Penguins& 1.85 & 0.462 & 36.0 & 0.756 \\
Breast Cancer& 3.88 & 0.129 & 5.3  & 0.099 \\
Wholesale Customers& 2.29 & 0.382 & 44.0  & 1.024 \\
\bottomrule
\end{tabular}
\end{table}

Table~\ref{tab:actionability_sensitivity} reports feasibility of the most-parsimonious counterfactual as an increasing proportion of features are randomly set non-actionable. Feasibility degrades at different rates across datasets. Breast Cancer remains highly feasible even at 75\% non-actionable features (88.7\%), whereas Iris and Palmer Penguins drop sharply under the same constraint (24.0\% and 24.7\%). This loosely tracks the feature-normalised cardinality $r^\star/d$ from Table~\ref{tab:parsimony_primary}.

\begin{table}[!t]
\centering
\scriptsize
\caption{Actionability sensitivity for most-parsimonious counterfactuals on the primary evaluation datasets. Entries show feasibility rates under increasingly restrictive random non-actionability masks, averaged across the three modes with contracted target regions.}
\label{tab:actionability_sensitivity}
\begin{tabular}{p{3.0cm}cccc}
\toprule
&\multicolumn{4}{c}{\textbf{Non-actionable features (\%)}}\\
\cline{2-5}
\textbf{Dataset} &
\textbf{0\%} &
\textbf{25\%} &
\textbf{50\%} &
\textbf{75\%} \\
\midrule
Iris& 100.0 & 93.3 & 66.7 & 24.0 \\
Wine& 100.0 & 100.0 & 98.7 & 39.3 \\
Palmer Penguins& 100.0 & 98.7 & 72.0 & 24.7 \\
Breast Cancer& 100.0 & 100.0 & 100.0 & 88.7 \\
Wholesale Customers& 100.0 & 94.7 & 86.0 & 60.0\\
\bottomrule
\end{tabular}
\end{table}

\subsection{Contraction-based robustness}

Table~\ref{tab:alpha_primary_by_algorithm} reports $\alpha^\star_t$ (the smallest contraction factor retaining all observed data points in a target cluster), and the median point-wise contraction score for each dataset under both Unweighted $k$-means and Ranked + weighted SHARK modes of VoICE (Algorithm~\ref{alg:voice_cf}). Breast Cancer is essentially invariant to mode, with median $\alpha^\star_t$ changing only marginally (0.989 to 0.992), consistent with its insensitivity to mode reported in Section~\ref{sec:result_least_cost}. Iris is the clearest exception: SHARK weighting sharply reduces its minimum $\alpha^\star_t$ (0.851 to 0.305) and its median point-wise score (0.079 to 0.000), indicating that its target regions become substantially more contractible once feature weights concentrate on a small number of features. Wholesale Customers shows the highest median point-wise scores among the primary datasets under both modes, with values of 0.274 for unweighted $k$-means and 0.417 for ranked and weighted SHARK. This indicates that its cluster members sit less centrally within their bounded target regions, consistent with the comparatively large intervention costs reported in Table~\ref{tab:least_cost_primary}.

\begin{table*}[!t]
\centering
\scriptsize
\setlength{\tabcolsep}{5pt}
\renewcommand{\arraystretch}{1.05}
\caption{Contraction-based robustness for the primary evaluation datasets, where $\alpha^\star_t$ is the smallest contraction factor retaining all observed data points in a target cluster.}
\label{tab:alpha_primary_by_algorithm}
\begin{tabular}{llcccc}
\toprule
\textbf{Dataset} &
\textbf{Mode} &
\textbf{Min $\alpha^\star_t$} &
\textbf{Median $\alpha^\star_t$} &
\textbf{Max $\alpha^\star_t$} &
\textbf{Median score} \\
\midrule

Iris
& Unweighted $k$-means
& 0.851 & 0.957 & 0.984 & 0.079 \\
& Ranked + weighted SHARK
& 0.305 & 0.861 & 0.928 & 0.000 \\

\midrule

Wine
& Unweighted $k$-means
& 0.891 & 0.915 & 0.945 & 0.127 \\
& Ranked + weighted SHARK
& 0.940 & 0.961 & 0.996 & 0.079 \\

\midrule

Palmer Penguins
& Unweighted $k$-means
& 0.677 & 0.958 & 0.989 & 0.105 \\
& Ranked + weighted SHARK
& 0.659 & 0.974 & 0.996 & 0.128 \\

\midrule

Breast Cancer
& Unweighted $k$-means
& 0.986 & 0.989 & 0.992 & 0.094 \\
& Ranked + weighted SHARK
& 0.992 & 0.992 & 0.992 & 0.079 \\

\midrule

Wholesale Customers
& Unweighted $k$-means
& 0.548 & 0.975 & 0.981 & 0.274 \\
& Ranked + weighted SHARK
& 0.931 & 0.991 & 0.999 & 0.417\\

\bottomrule
\end{tabular}
\end{table*}

\subsection{Weight-concentration stress tests}
\label{sec:Weight_concentration}

Table~\ref{tab:weight_concentration_diagnostics} reports feature-weight concentration and contraction for the primary and diagnostic stress-test datasets. The primary datasets retain multi-feature weighted geometries ($d_{\mathrm{eff}}$ from 2.53 to 26.21), whereas several diagnostic datasets collapse onto a single feature (Diabetes, German Credit, Heart Failure, $d_{\mathrm{eff}}=1.00$) or a small categorical block (Obesity, Student Performance, $d_{\mathrm{eff}}=2.00$). In these cases, near-zero $\alpha^\star_t$ reflects compactness in the learned weighted subspace rather than the full feature space. This justifies treating the mixed-type datasets as diagnostic stress tests rather than including them in the primary comparison, and motivates future work on categorical-aware distances, grouped feature weights, and regularisation of feature-weight concentration.

\begin{table*}[!t]
\centering
\scriptsize
\setlength{\tabcolsep}{4pt}
\renewcommand{\arraystretch}{1.05}
\caption{Feature-weight concentration and contraction under SHARK-weighted clustering, where $d_{\mathrm{eff}}=1/\sum_j \omega_j^2$ is the effective number of active features. Near-zero $\alpha^\star_t$ coincides with datasets where feature weights are most concentrated ($d_{\mathrm{eff}}$), suggesting weight concentration rather than genuine compactness.}
\label{tab:weight_concentration_diagnostics}
\begin{tabular}{lccclc}
\toprule
\textbf{Dataset} &
\textbf{Max $\omega$} &
\textbf{$d_{\mathrm{eff}}$} &
\textbf{Top-2 mass} &
\textbf{Dominant feature/block} &
\textbf{$\alpha^\star_t$ range} \\
\midrule
\multicolumn{3}{l}{\textit{Evaluation role: Primary}}\\
Iris
& 0.452
& 2.53
& 0.880
& petal width
& 0.305--0.928 \\

Wine
& 0.150
& 10.40
& 0.295
& OD280/OD315
& 0.940--0.996 \\

Palmer Penguins
& 0.358
& 3.58
& 0.629
& flipper length
& 0.659--0.996 \\

Breast Cancer
& 0.065
& 26.21
& 0.122
& concave points mean
& 0.992--0.992 \\

Wholesale Customers
&0.259
&4.91
&0.510
&Grocery
&0.931--0.999\\

\midrule
\multicolumn{3}{l}{\textit{Evaluation role: Diagnostic stress test}}\\
Diabetes
& 1.000
& 1.00
& 1.000
& Stroke
& 0.000 \\

Obesity
& 0.500
& 2.00
& 1.000
& CALC block
& 0.000 \\

German Credit
& 1.000
& 1.00
& 1.000
& credit-history dummy
& 0.000 \\

Heart Failure
& 1.000
& 1.00
& 1.000
& smoking
& 0.000 \\

Student Performance
& 0.500
& 2.00
& 1.000
& Fjob block
& 0.000 \\
\bottomrule
\end{tabular}
\end{table*}

\section{Conclusion}

This paper introduced VoICE, a geometric framework for generating counterfactual explanations in feature-weighted $k$-means clustering by projecting onto weighted Voronoi regions rather than pairwise cluster boundaries. The framework combines homothetic contraction, ranked actionable feature subsets, and directional counterfactual ranges within a unified optimisation formulation, and guarantees validity by construction, unlike pairwise-boundary approaches such as CFCLUST~\cite{vardakas2025counterfactual}.

Across the primary evaluation datasets, VoICE produced feasible counterfactuals with well-behaved weighted geometries. The results show that intervention cost, cardinality, and directional tolerance capture distinct and complementary aspects of explanation quality, feature-normalised cardinality $r^\star/d$ revealed that datasets with the largest raw intervention cardinality (e.g.\ Breast Cancer) could nonetheless be the most parsimonious relative to their dimensionality, and low-cost counterfactuals were not always the most robust. The diagnostic stress-test datasets further showed that feature-weight concentration can cause contraction statistics to reflect a collapsed, low-dimensional subspace rather than genuine multivariate compactness, an important practical caveat for mixed-type data.

A key direction for future work concerns mixed-type datasets with binary, ordinal, or one-hot encoded features, where concentrated weights can collapse the geometry onto a small subset of features. Categorical-aware distances and grouped or regularised feature weights are promising directions here.

\ifarxiv
    \bibliography{references}
\else
    \section*{Declarations}
    \textbf{Conflicts of interest} We declare that we have no known competing financial interests or personal relationships that could have appeared to influence the work reported in this paper.\\\\
    \textbf{Ethics approval} This article does not contain any studies with human participants or animals performed by any of the authors.
    \bibliographystyle{unsrt}
    \bibliography{references}

    \begin{IEEEbiography}[]{RICHARD J. FAWLEY}
    is a doctoral researcher in the School of Computer Science and Electronic Engineering at the University of Essex, UK. He received the B.Sc. degree in Computer Science and the M.Sc. degree in Data Science and Analytics from the University of Hertfordshire, UK, and a Diploma in Mathematics from The Open University, UK. He graduated from his M.Sc. programme with distinction and received the a University Prize for the courses highest-achieving student. He is a Chartered IT Professional, a Fellow of BCS, a Fellow of the Institution of Analysts and Programmers, and a technology architect in industry. He is the lead author of SHARK, a Shapley-inspired feature-weighting method for $k$-means published in \textit{Expert Systems with Applications}. His research interests include unsupervised learning, feature-weighted clustering, feature selection, and counterfactual explainability.
    \end{IEEEbiography}

    \begin{IEEEbiography}[]{RENATO CORDEIRO DE AMORIM} is a Senior Lecturer in Computer Science and AI at the University of Essex. He holds a PhD in Computer Science from Birkbeck University of London (2011), and he is an Associate Editor for two journals published by Springer and Elsevier. He has authored a number of papers introducing novel methods following the unsupervised and semi-supervised learning frameworks, with applications in fields such as security, biosignal processing, and data science in general. He received the Chikio Hayashi award (2017), and his research has been funded by Microsoft, the Royal Society and Innovate UK. 
    \end{IEEEbiography}
    \EOD
\fi

\end{document}